\newcommand{\ma}[1]{\mathcal{#1}}
\newcommand{\mc}[1]{\mathbb{#1}}
\newcommand{\bs}{\backslash}
\newcommand{\thickhline}{%
	\noalign {\ifnum 0=`}\fi \hrule height 1pt
	\futurelet \reserved@a \@xhline
}
\newcommand\restr[2]{{
  \left.\kern-\nulldelimiterspace 
  #1 
  \vphantom{\big|} 
  \right|_{#2} 
  }}
\newtheorem{cor}{Corollary}
\newtheorem{theo}{Theorem}
\newtheorem{lem}{Lemma}
\newtheorem{defi}{Definition}
\newtheorem*{theo*}{Theorem}
\newtheorem*{example}{Examples}
\DeclareMathOperator\lo{LO}
\DeclareMathOperator\diag{Diag}
\DeclareMathOperator\Agg{Agg}
\DeclareMathOperator\Com{Com}
\DeclareMathOperator\ReLu{ReLu}
\DeclareMathOperator\MLP{MLP}
\newcommand{\softmax}{\mathrm{softmax}}
\newcommand{\N}{\mathcal{N}}  
\newcommand{\n}{N}    
\newcommand{\p}{p}    
\newcommand{\ly}{l}    
\newcommand{\m}{m}    
\newcommand{\kh}{k}    
\title{
A Hierarchy of Graph Neural Networks \\ Based on Learnable Local Features}
\author{%
  Michael Lingzhi Li\thanks{Website: https://mlli.mit.edu} \\
  Operational Research Center\\
  Massachusetts Institute of Technology\\
  Cambridge, MA 02139 \\
  \texttt{mlli@mit.edu} \\
  \And
Meng Dong \\
  Institute for Applied Computational Science\\
  Harvard University \\
  Cambridge, MA 02138 \\
  \texttt{mengdong@g.harvard.edu} \\
    \And
Jiawei Zhou  \\
  Harvard School of Engineering and Applied Sciences\\
  Harvard University \\
  Cambridge, MA 02138 \\
  \texttt{jzhou02@g.harvard.edu} \\
    \And
Alexander M. Rush \\
  Harvard School of Engineering and Applied Sciences\\
  Harvard University \\
  Cambridge, MA 02138 \\
  \texttt{srush@seas.harvard.edu} 
}
\begin{document}

\maketitle

\begin{abstract}
Graph neural networks (GNNs) are a powerful tool to learn representations on graphs by iteratively aggregating features from node neighbourhoods. Many variant models have been proposed, but there is limited understanding on both how to compare different architectures and how to construct GNNs systematically. Here, we propose a hierarchy of GNNs based on their aggregation regions. We derive theoretical results about the discriminative power and feature representation capabilities of each class. Then, we show how this framework can be utilized to systematically construct arbitrarily powerful GNNs. As an example, we construct a simple architecture that exceeds the expressiveness of the Weisfeiler-Lehman graph isomorphism test. We empirically validate our theory on both synthetic and real-world benchmarks, and demonstrate our example's theoretical power translates to strong results on node classification, graph classification, and graph regression tasks. 
\end{abstract}

\section{Introduction}


Graphs arise naturally in the world and are key to applications in chemistry, social media, finance, and many other areas. Understanding graphs is important and learning graph representations is a key step. Recently, there has been an explosion of interest in utilizing graph neural networks (GNNs), which have shown outstanding performance across tasks (e.g. \cite{kipf2016semi}, \cite{velivckovic2017graph}). Generally, we consider node-feature GNNs which operate recursively to aggregate representations from a neighbouring region \citep{gilmer2017neural}. 

In this work, we propose a representational hierarchy of GNNs, and derive the discriminative power and feature representation capabilities in each class. Importantly, while most previous work has focused on GNNs aggregating over vertices in the immediate neighbourhood, we consider GNNs aggregating over arbitrary subgraphs containing the node. We show that, under mild conditions, there is only in fact a small class of subgraphs that are valid aggregation regions. These subgraphs provide a systematic way of defining a hierarchy for GNNs.

Using this hierarchy, we can derive theoretical results which provide insight into GNNs. For example, we show that no matter how many layers are added, networks which only aggregate over immediate neighbors cannot learn the number of triangles in a node's neighbourhood. We demonstrate that many popular frameworks, including GCN\footnote{Throughout the paper we use GCN to specifically refer to the model proposed in \cite{kipf2016semi}.} \citep{kipf2016semi}, GAT \citep{velivckovic2017graph}, and N-GCN \citep{abu2018n} are unified under our framework. We also compare each class using the Weisfeiler-Lehman (WL) isomorphism test \citep{weisfeiler1968reduction}, and conclude our hierarchy is able to generate arbitrarily powerful GNNs. Then we utilize it to systematically generate GNNs exceeding the discriminating power of the 1-WL test.

Experiments utilize both synthetic datasets and standard GNN benchmarks. We show that the method is able to learn difficult graph properties where standard GCNs fail, even with multiple layers. On benchmark datasets, our proposed GNNs are able to achieve strong results on multiple datasets covering node classification, graph classification, and graph regression. 


\section{Related Work}
Numerous works (see \cite{li2015gated}, \cite{atwood2016diffusion}, \cite{defferrard2016convolutional}, \cite{kipf2016semi}, \cite{niepert2016learning}, \cite{santoro2017simple}, \cite{velivckovic2017graph}, \cite{verma2018graph}, \cite{zhang2018end}, \cite{ivanov2018anonymous}, \cite{wu2019simplifying} for examples) have constructed different architectures to learn graph representations. Collectively, GNNs have pushed the state-of-the-art on many different tasks on graphs, including node classification, and graph classification/regression. However, there are relatively few works that attempt to understand or categorize GNNs theoretically. 

\cite{scarselli2009computational} presented one of the first works that investigated the capabilities of GNNs. They showed that the GNNs are able to approximate a large class of functions (those satisfying preservation of the unfolding equivalence) on graphs arbitrarily well. A recent work by \cite{xu2018powerful} also explored the theoretical properties of GNNs. Its definition of GNNs is limited to those that aggregate features in the immediate neighbourhood, and thus is a special case of our general framework. We also show that the paper's conclusion that GNNs are at most as powerful as the Weisfeiler-Lehman test fails to hold in a simple extension.

Survey works including \cite{zhou2018graph} and \cite{wu2019comprehensive} give an overview of the current field of research in GNNs, and provide structural classifications of GNNs. We differ in our motivation to categorize GNNs from a computational perspective. We also note that our classification only covers static node feature graphs, though extensions to more general settings are possible.

The disadvantages of GNNs using localized filter to propagate information are analyzed  in \cite{li2018insights}. One major problem is their incapability of exploring global graph structures. To alleviate this, there has been two important works in expanding neighbourhoods: N-GCN \citep{abu2018n} feeds higher-degree polynomials of adjacency matrix to multiple instantiations of GCNs, and \cite{morris2018weisfeiler} generalizes GNNs to $k$-GNNs by constructing a set-based $k$-WL to consider higher-order neighbourhoods and capture information beyond node-level. We compare architectures constructed using our hierarchy to these previous works in the experiments, and show that systematic construction of higher-order neighbourhoods brings an advantage across different tasks.

\section{Background}\label{sec:background}

Let $G=(V,E)$ denote an undirected and unweighted graph, where $|V|=\n$, and $|E|=\Omega$.  Unless otherwise specified, we include self-loops for every node $v \in V$. Let $A$ be the graph's adjacency matrix. Denote $d(u,v)$ as the distance between two nodes $u$ and $v$ on a graph, defined as the minimum length of walk between $u$ and $v$. We further write $d_v$ as the degree of node $v$, and $\N(v)$ as the set of nodes in the direct neighborhood of $v$ (including $v$ itself).

Graph Neural Networks (GNNs) utilize the structure of a graph $G$ and node features $X\in \mc{R}^{\n \times \p}$ to learn a refined representation of each node, where $\p$ is input feature size, i.e. for each node $v \in V$, we have features $X_v \in \mc{R}^{\p}$.

A GNN is a function that for every layer $\ly$ at every node $v$ aggregates features over a connected subgraph $G_v\subseteq G$ containing the node $v$, and updates a hidden representation $H^{(\ly)}=[h_1^{(\ly)},\cdots, h_\n^{(\ly)}]$. Formally, we can define the $\ly$th layer of a GNN (with $h_v^{(0)}=X_v$):
\begin{equation*}
    a^{(\ly)}_v = \restr{\Agg}{G_v}(H^{(\ly-1)}) \qquad h^{(\ly)}_v=\Com(h^{(\ly-1)}_v,a^{(\ly)}_v)
\end{equation*}
where $\rvert$ is the restriction symbol over the domain $G_v$, the aggregation subgraph. The aggregation function $\Agg(\cdot)$ is invariant with respect to the labeling of the nodes. The aggregation function, $\Agg(\cdot)$, summarizes information from a neighbouring region $G_v$, while the combination function $\Com(\cdot)$ joins such information with the previous hidden features to produce a new representation. 


For different tasks, these GNNs are combined with an output layer to coerce the final output into an appropriate shape. Examples include fully-connected layers \citep{xu2018powerful}, convolutional layers \citep{zhang2018end}, and simple summation \citep{verma2018graph}. These output layers are task-dependent and not graph-dependent, so we would omit these in our framework, and consider the node level output $H^{(L)}$ of the final $L$th layer as the output of the GNN.

We consider three representative GNN variants in terms of this notation, where $W^{(\ly)}$ is a learnable weight matrix at layer $\ly$:\footnote{For simplicity we present the version without feature normalization using node degrees.}

\begin{itemize}
    \item \textbf{Graph Convolutional Networks} (GCNs) \citep{kipf2016semi}: 
    \begin{align*}
        \Agg(\cdot)&= \sum_{u \in \N(v)} h_u^{(\ly-1)}\qquad \Com(\cdot) = \ReLu(a_v^{(\ly)}W^{(\ly)})
    \end{align*}
    \item \textbf{Graph Attention Networks} (GAT) \citep{velivckovic2017graph}: 
    \begin{align*}
        \Agg(\cdot)&= \sum_{u \in \N(v)} \underset{u\in\N(v)}{\softmax}\left(\mathrm{MLP}(h_u^{(\ly-1)},h_v^{(\ly-1)})\right) h_u^{(\ly-1)}\qquad \Com(\cdot) = \ReLu(a_v^{(\ly)}W^{(\ly)}) 
    \end{align*}
    \item \textbf{N-GCN} \citep{abu2018n} (2-layer case): 
    \begin{align*}
        \Agg(\cdot)&= \sum_{u_1 \in \N(v)}\;\sum_{u_2 \in \N(u_1)}  h_{u_2}^{(\ly -1)}\qquad \Com(\cdot) =  \ReLu(a_v^{(\ly)}W^{(\ly)})
    \end{align*}
\end{itemize}

\section{Hierarchical Framework for Constructing GNNs}


Our proposed framework uses walks to specify a hierarchy of aggregation ranges. The aggregation function over a node $v \in G$ is a permutation-invariant function over a connected subgraph $G_v$. Consider the simplest case, using the neighbouring vertices $u \in \N(v)$, utilized by many popular architectures (e.g. GCN, GAT). Then $G_v$ in this case is a star-shaped subgraph, as illustrated below in Figure \ref{fig:AggregationRegion}. We refer to that as $D_1(v)$, which in terms of walks, is the union of all edges and nodes in length-2 walks that start and end at $v$. 

To build a hierarchy, we consider benefits of longer walks. The next simplest graph feature is the triangles in the neighbourhood of $v$. Knowledge on connections between the neighbouring nodes of $v$ are necessary for considering triangles. A natural formulation using walks would be length-3 walks that start and end at $v$. A length-3 returning walk outlines a triangle, and the union of all length-3 returning walks induces a subgraph, formed by all nodes and edges included in those walks. This is illustrated in Figure \ref{fig:AggregationRegion} as $L_1(v)$. 



\begin{defi} Define the set of all walks of length $\leq \m$ returning to $v$ as $W_\m(v)$.
For $\kh \in \mc{Z}^+$, we define $D_{\kh}(v)$ as the subgraph formed by all the edges and nodes in $W_{2\kh}(v)$, while $L_\kh(v)$ is defined as the subgraph formed by all the nodes and edges in $W_{2\kh+1}(v)$.
\end{defi}

Intuitively, $L_\kh(v)$ is a subgraph of $G$ consisting of all nodes and edges in the $\kh$-hop neighbourhood of node $v$, and $D_\kh(v)$ only differs from $L_\kh(v)$ by excluding the edges between the distance-$\kh$ neighbors of $v$. We explore this further in Section \ref{sec:theoresult}. An example illustration of the neighbourhoods defined above is shown in Figure \ref{fig:AggregationRegion}. 

This set of subgraphs naturally induces a hierarchy with increasing aggregation region: 
\begin{defi}
\label{defi:aggreg}
The \textit{D-L hierarchy of aggregation regions} for a node $v$, $\ma{A}_{D-L}(v)$ in a graph $G$ is, in increasing order:
\begin{equation}
    \ma{A}_{D-L}(v)=\{D_1(v), L_1(v), \cdots, D_\kh(v), L_\kh(v), \cdots\} \label{eq:setagg}
\end{equation}
Where $D_1(v)\subseteq L_1(v) \subseteq D_2(v)\subseteq L_2(v)\cdots$. 
\end{defi}

Next, we consider the properties of this hierarchy. One important property is completeness - that the hierarchy can classify every possible GNN. Note that there is no meaningful complete hierarchy if $G_v$ is arbitrary. Therefore, we propose to limit our focus to those $G_v$ that can be defined as a function of the distance from $v$. Absent specific graph structures, distance is a canonical metric between vertices and this definition includes all examples listed in Section \ref{sec:background}. With such assumption, we can show that the D-L hierarchy is complete:
\begin{theo}
\label{theo:complete}
Consider a GNN 
defined by its action at each layer:
\begin{equation}
   a^{(\ly)}_v = \restr{\Agg}{G_v}(H^{(\ly-1)}) \qquad h^{(\ly)}_v=\Com(h^{(\ly-1)}_v,a^{(\ly)}_v)
\end{equation}
Assume $G_v$ can be defined as a univariate function of the distance from $v$. Then both of the following statements are true for all $\kh \in \mc{Z}^+$:
\begin{itemize}
    \item If $D_\kh(v) \subseteq G_v \subseteq L_\kh(v)$, then $G_v \in \{D_\kh(v), L_\kh(v)\}$.
    \item If $L_\kh(v) \subseteq G_v \subseteq D_{\kh+1}(v)$, then $G_v \in \{L_\kh(v), D_{\kh+1}(v)\}$.
\end{itemize}
\end{theo}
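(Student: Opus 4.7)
The plan is to extract explicit combinatorial descriptions of $D_\kh(v)$ and $L_\kh(v)$ in terms of vertex-distances to $v$, and then let the univariate-distance hypothesis collapse the sandwiched possibilities for $G_v$.

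First, I would exploit the self-loops. A returning walk from $v$ of length $\leq m$ that uses an edge $(u,u')$ exists if and only if $d(v,u)+d(v,u')+1\leq m$: traverse the edge once and pad the remaining steps with self-loops at $v$, $u$, or $u'$. Consequently both $D_\kh(v)$ and $L_\kh(v)$ have the \emph{same} vertex set, namely all vertices at distance $\leq \kh$ from $v$, while their edge sets are $\{(u,u'):d(v,u)+d(v,u')\leq 2\kh-1\}$ and $\{(u,u'):d(v,u)+d(v,u')\leq 2\kh\}$ respectively. Thus $L_\kh(v)\setminus D_\kh(v)$ consists exactly of the edges both of whose endpoints lie at distance $\kh$, and $D_{\kh+1}(v)\setminus L_\kh(v)$ consists of the distance-$(\kh+1)$ vertices together with the edges joining a distance-$\kh$ vertex to a distance-$(\kh+1)$ one.

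For the first bullet, since the sandwiching subgraphs share a vertex set, the only room in choosing $G_v$ lies in the edges between pairs of distance-$\kh$ vertices. The univariate-distance assumption treats all such edges identically (they share the same distance profile), so either all are included, giving $G_v=L_\kh(v)$, or none are, giving $G_v=D_\kh(v)$. For the second bullet, I would split on whether the distance-$(\kh+1)$ vertices lie in $G_v$; the univariate-distance assumption makes this an all-or-nothing choice. If none are included, then $G_v$ has the vertex set of $L_\kh(v)$, and since every edge in $D_{\kh+1}(v)\setminus L_\kh(v)$ touches a distance-$(\kh+1)$ vertex, no such edge is in $G_v$, forcing $G_v=L_\kh(v)$ by the lower containment. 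If all distance-$(\kh+1)$ vertices are included, the same uniformity applies to the distance-$\kh$-to-$(\kh+1)$ edges: either all are in, yielding $G_v=D_{\kh+1}(v)$, or none are. The latter case disconnects each distance-$(\kh+1)$ vertex from $v$ inside $G_v$ — such a vertex has no other neighbours within $D_{\kh+1}(v)$ because, as computed above, $D_{\kh+1}(v)$ contains no edges between two distance-$(\kh+1)$ vertices — contradicting the standing requirement that the aggregation subgraph be connected. Hence $G_v=D_{\kh+1}(v)$.

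The main obstacle, really the only subtle point, is pinning down what ``univariate function of the distance from $v$'' should mean for edges, which have two endpoints. The natural reading consistent with the walk-based definitions is that edge inclusion depends only on the multiset $\{d(v,u),d(v,u')\}$ (equivalently, on the shortest returning-walk length $d(v,u)+d(v,u')+1$ through that edge); with this reading the argument reduces to the short case analysis above, with connectedness doing the real work in the ``all distance-$(\kh+1)$ vertices but no cross-edges'' case. Degenerate cases — for instance when no vertex sits at distance exactly $\kh$, or when there are no edges between distance-$\kh$ vertices — simply collapse the sandwich and make the statement trivially true.
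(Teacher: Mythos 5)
Your proof is correct and takes essentially the same route as the paper's: both reduce the problem to the difference sets $L_\kh(v)\setminus D_\kh(v)$ (edges between two distance-$\kh$ vertices) and $D_{\kh+1}(v)\setminus L_\kh(v)$ (distance-$(\kh+1)$ vertices together with their edges to distance-$\kh$ vertices), use the distance-only hypothesis to force all-or-nothing inclusion of elements with the same distance profile, and rule out the remaining intermediate configuration in the second case via subgraph validity and connectedness of the aggregation region. The paper phrases this as a contradiction argument about a maximal partially contained subset with a permutation-invariance step, whereas you argue directly from an explicit walk-based characterization of the regions, but the substance is the same.
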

This theorem shows that one cannot create an aggregation region based on node distance that is "in between" the hierarchy defined. With Theorem 1, we can use the D-L aggregation hierarchy to create a hierarchy of GNNs based on their aggregation regions.

\begin{figure}
  \includegraphics[width=\linewidth]{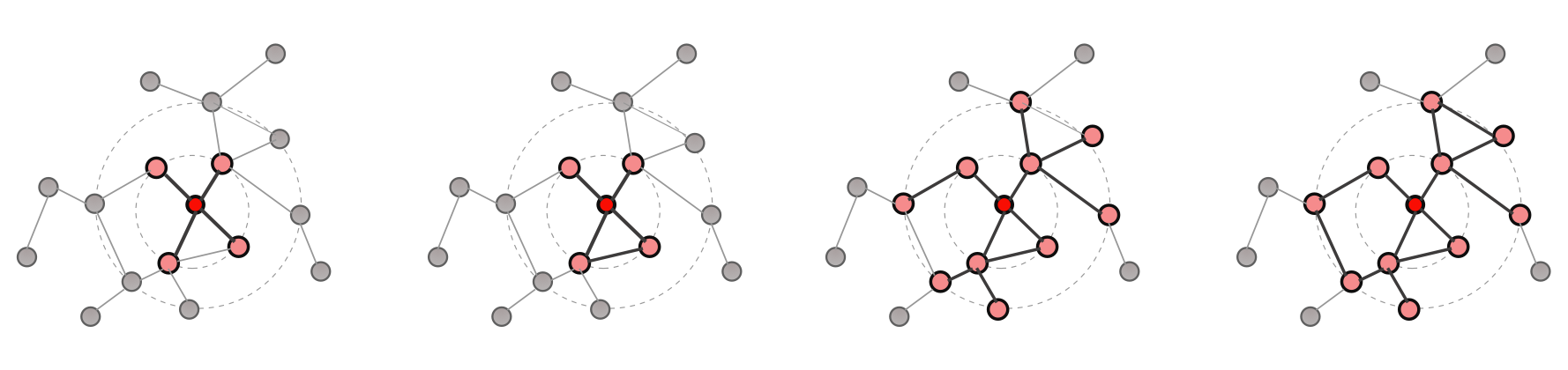}
  \caption{Illustration of D-L aggregation regions. Dashed circles represent neighborhoods of different hops. From left to right: $D_1$, $L_1$, $D_2$, and $L_2$. Both $D_\kh$ and $L_\kh$ include nodes within the k-hop neighborhood, but $D_\kh$ does not include edges between nodes on the outmost ring whereas $L_\kh$ does.}
  \label{fig:AggregationRegion}
\end{figure}
\begin{defi}
For $\kh\in \mc{Z}^+$, $\ma{G}(D_\kh)$ is the set of all graph neural networks with aggregation region $G_v=D_\kh(v)$ that is not a member of $\ma{G}(L_\kh)$. $\ma{G}(L_\kh)$ is the set of all graph neural networks with aggregation region $G_v=L_\kh(v)$ that is not a member of $\ma{G}(D_{\kh-1})$.
\end{defi}
We explicitly exclude those belonging to a lower aggregation region in order to make the hierarchy well-defined (otherwise a GNN of order $\ma{G}(D_1)$ is trivially one of order $\ma{G}(L_1)$). We also implicitly define $D_0=L_0=\emptyset$. 


\subsection{Constructing D-L GNNs }
The D-L Hierarchy can be used both to classify existing GNNs and also to construct new models. We first note that all GNNs which aggregate over immediate neighbouring nodes fall in the class of $\ma{G}(D_1)$. 
For example, {Graph Convolutional Networks} (GCNs) defined in Section \ref{sec:background} is in $\ma{G}(D_1)$ since its aggregation is $\Agg(\cdot)= \sum_{u \in D_1(v)} h_u^{(\ly-1)}$, and similarly the N-GCN example is in $\ma{G}(D_2)$. Note that these 
classes are defined by the subgraph used by $\Agg$, but does not imply
that these networks reach the maximum discriminatory power of their class (defined in the next section). 

We can use basic building blocks to implement different levels of GNNs. These examples are not meant to be exhaustive and only serve as a glimpse of what could be achieved with this framework.  
\begin{example}\leavevmode
For every $\kh \in \mc{Z}^+$:
\begin{itemize}
    \item Any GNN with $\Agg(\cdot)=\displaystyle \sum_{ u \in D_\kh(v)}(A^{\kh})_{vu} h_{u}^{(\ly-1)}$ is a GNN of class $\ma{G}(D_\kh)$.
    \item Any GNN with $\Agg(\cdot)= \displaystyle \sum_{ u\in D_\kh(v)}(A^{\kh+1})_{vu} h_{u}^{(\ly-1)}$  is a GNN of class $\ma{G}(L_\kh)$. 
    \item Any GNN with $\Agg(\cdot)= \displaystyle (A^{2\kh+1})_{vv} \cdot h_{v}^{(\ly-1)}$  is a GNN of class $\ma{G}(L_\kh)$. 
\end{itemize}
\end{example}
Intuitively, $(A^\kh)_{vu}$ counts all $\kh$-length walks from $v$ to $u$, which includes all nodes in the $\kh$-hop neighbourhood. The difference between the first and the second example above is that in the second one, we allow $(\kh+1)$-length walks from the nodes in the $\kh$-hop neighbourhood, which promotes it to be class of $\ma{G}(L_\kh)$. Note the simplicity of the first and the last examples: in matrix form the first is $A^\kh H^{(\ly-1)}$ while the last form is $\diag(A^{2\kh+1})\cdot H^{(\ly-1)}$. 


The building blocks can be gradually added to the original aggregation function. 
This is particularly useful if an experimenter knows there are higher-level properties that are necessary to compute, for instance to incorporate knowledge of triangles, one can design the following network (see Section \ref{sec:experiments} for more details):
\begin{equation}
    w_1 AH^{(\ly-1)} + w_2 \diag(A^3) \cdot H^{(\ly-1)} 
\end{equation}
where $w_1, w_2 \in \mc{R}$ are learnable weights.



\section{Theoretical Properties}
\label{sec:theoresult}
We can prove interesting theoretical properties for each class of graph neural networks on this hierarchy. To do this, we utilize the Weisfeiler-Lehman test, a powerful classical algorithm used to discriminate between potentially isomorphic graphs. In interest of brevity, its introduction is included in the Appendix in Section \ref{app:wltest}. 

We define the terminology of "discriminating graphs" formally below:
\begin{defi}
The discriminative power of a function $f$ over graphs $G$ is the set of graphs $S^f_G$ such that for every pair of graphs $G_1, G_2 \in S^f_G$, the function has $f(G_1)=f(G_2)$ iff $G_1 \cong G_2$ and $f(G_1)\neq f(G_2)$ iff $G_1 \not \cong G_2$. We say $f$ decides $G_1, G_2$ as isomorphic if $f(G_1)=f(G_2)$ and vice versa. 
\end{defi}
Essentially, $S^f_G$ is the set of graphs that $f$ can decide correctly whether any two of them are isomorphic or not. We say $f$ has a \emph{greater} discriminative power than $g$ if $S_G^f \supsetneq S_G^g$. Now we first introduce a theorem proven by \cite{xu2018powerful}:
\begin{theo}
\label{theo:discrie0}
The maximum discriminative power of the set of GNNs in $\ma{G}(D_1)$ is strictly less than or equal to the 1-dimensional WL test.
\end{theo}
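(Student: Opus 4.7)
The plan is to prove the contrapositive: if the 1-WL algorithm fails to distinguish two graphs $G_1, G_2$, then no GNN in $\ma{G}(D_1)$ can distinguish them either. Combined with the observation that the GNN's final output is determined by the multiset of node hidden representations after the last layer (followed by a graph-independent pooling/readout), this shows $S^{\GNN}_G \subseteq S^{\text{WL}}_G$ for every GNN in $\ma{G}(D_1)$, which is what the theorem asserts.

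The core is an induction on the layer index $\ly$ with the following claim: for any two nodes $v_1 \in G_1, v_2 \in G_2$, if their $\ly$th-layer hidden representations disagree, $h^{(\ly)}_{v_1} \neq h^{(\ly)}_{v_2}$, then their WL colors after $\ly$ refinement rounds also disagree. The base case $\ly=0$ follows because the initial GNN features $X_v$ are what the WL test is seeded with (or a coarsening thereof), so equal WL initial colors correspond to equal initial GNN features. For the inductive step, I would use two facts about $\ma{G}(D_1)$: first, $D_1(v)$ is precisely the star subgraph on $v$ and its immediate neighbors $\N(v)$; second, $\Agg$ is permutation-invariant by assumption, so $a^{(\ly)}_v$ is a function of the \emph{multiset} $\{\!\{h^{(\ly-1)}_u : u \in \N(v)\}\!\}$, and $h^{(\ly)}_v$ is a function of the pair $(h^{(\ly-1)}_v, a^{(\ly)}_v)$. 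Hence if $h^{(\ly)}_{v_1} \neq h^{(\ly)}_{v_2}$, either the center representations disagreed at layer $\ly-1$, or the neighbor multisets disagree as multisets. By the inductive hypothesis, disagreement in the center representations forces disagreement in the WL colors at round $\ly-1$; disagreement of the multisets forces disagreement of the multisets of WL colors around $v_1$ and $v_2$ at round $\ly-1$. Either condition causes the injective hash used by WL to produce distinct colors at round $\ly$.

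To finish, I would pass from nodes to graphs. If 1-WL cannot distinguish $G_1$ from $G_2$, then for every round $\ly$ the two graphs have identical multisets of WL colors. The node-level claim lifts to a multiset-level claim: the multisets $\{\!\{h^{(\ly)}_v : v \in V(G_1)\}\!\}$ and $\{\!\{h^{(\ly)}_v : v \in V(G_2)\}\!\}$ coincide, because otherwise one could exhibit nodes receiving different GNN representations whose WL colors must, by the node-level claim, also differ, contradicting that the WL multisets agree. Since the GNN's graph-level output is a permutation-invariant function of this multiset, the GNN outputs the same value on $G_1$ and $G_2$.

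The main obstacle is being careful about the correspondence between the initial WL coloring and the GNN's initial features $X_v$: if $X_v$ is strictly more informative than the default uniform WL seed, one must initialize 1-WL with colors induced by $X_v$ (which is standard). The inductive step itself is conceptually clean but relies crucially on the fact that $D_1(v)$ contains exactly the immediate neighborhood and no edges among neighbors, so that $\Agg$ only sees the multiset of neighbor representations — this is the single structural fact that pins down the $\ma{G}(D_1)$ class and makes the analogy with 1-WL tight.
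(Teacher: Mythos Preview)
Your proposal is correct and is precisely the standard argument from \cite{xu2018powerful}: induct on the layer index to show that equal WL colors at round $\ly$ force equal GNN hidden states at layer $\ly$ (using that, for $G_v=D_1(v)$, the permutation-invariant $\Agg$ can depend only on the multiset $\{\!\{h_u^{(\ly-1)}:u\in\N(v)\}\!\}$), and then lift this node-level refinement to multisets of representations to conclude at the graph level. The paper does not supply its own proof of this theorem; it simply attributes the result to \cite{xu2018powerful}, so your write-up is exactly the argument the paper is invoking.
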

Their framework only included $\ma{G}(D_1)$ GNNs, and they upper bounded the discriminative power of such GNNs. With our generalized framework, we are able to prove a slightly surprising result:
\begin{theo}
\label{theo:discrie1}
The maximum discriminative power of the set of GNNs in $\ma{G}(L_1)$ is strictly greater than the 1-dimensional WL test. 
\end{theo}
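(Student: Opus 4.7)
The plan is to exhibit a single GNN $f \in \ma{G}(L_1)$ whose discriminative set $S^f_G$ strictly contains that of the $1$-WL test. I proceed in two steps: (i) build an $L_1$ aggregation that contains a $1$-WL-simulating $D_1$ component, forcing $f$ to decide at least every pair $1$-WL decides; and (ii) exhibit a non-isomorphic pair on which $1$-WL fails but $f$ succeeds. Both halves combine to give $S^f_G \supsetneq S^{\text{1-WL}}_G$, which is the theorem.

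\textbf{Construction and simulation of 1-WL.} Consider the update
\begin{equation*}
    h_v^{(\ly)} \;=\; \MLP^{(\ly)}\!\Bigl(\bigl[(1+\epsilon)\,h_v^{(\ly-1)} + \!\!\sum_{u \in \N(v)\setminus\{v\}}\!h_u^{(\ly-1)}\bigr] \;\|\; (A^3)_{vv}\cdot h_v^{(\ly-1)}\Bigr),
\end{equation*}
where $\|$ denotes concatenation. The left half is a GIN-style $D_1$ aggregation which, by \cite{xu2018powerful}, admits choices of $\MLP^{(\ly)}$ and $\epsilon$ making it injective on multisets of neighbour labels, and hence as expressive as $1$-WL colour refinement. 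The right half $(A^3)_{vv}\cdot h_v^{(\ly-1)}$ carries the number of closed $3$-walks at $v$ (i.e.\ twice the number of triangles through $v$), a quantity that requires the edges \emph{among} neighbours of $v$ and hence cannot be computed from $D_1(v)$ alone; by Theorem 1 the aggregation region is exactly $L_1(v)$, so $f \in \ma{G}(L_1)$. Because concatenation is lossless and the outer $\MLP^{(\ly)}$ can be taken injective, any pair separated by the $D_1$ component is still separated by $f$, so $S^f_G$ contains every pair $1$-WL decides correctly.

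\textbf{A pair that 1-WL misses.} Let $G_1 = K_{3,3}$ and let $G_2$ be the triangular prism $K_3\,\square\,K_2$. Both are connected, $3$-regular graphs on $6$ vertices, so at every iteration of $1$-WL all vertices of each graph share a single colour and the two colour multisets coincide; $1$-WL therefore decides $G_1 \cong G_2$, incorrectly. However $(A^3)_{vv}=0$ for every $v\in G_1$ (bipartite, no triangles) while $(A^3)_{vv}=2$ for every $v\in G_2$ (each vertex of the prism lies in exactly one triangle). After one layer of $f$ the two concatenated vectors, and hence the node-level multisets produced by $f$, differ between $G_1$ and $G_2$, so $f$ decides $G_1 \not\cong G_2$. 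Combined with the previous paragraph, $S^f_G \supsetneq S^{\text{1-WL}}_G$.

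\textbf{Main obstacle.} The delicate point is arguing that grafting the $L_1$-specific $\diag(A^3)$ feature onto a $D_1$ aggregation does not accidentally collapse distinctions $1$-WL would otherwise make. Using concatenation (rather than a summed combination) with an injective $\MLP^{(\ly)}$ is the clean safeguard: it preserves the full $D_1$ signal on one coordinate while strictly adding $L_1$ information on the other. A purely additive combination could in principle produce coincidental cancellations across non-isomorphic graphs and would require an extra genericity argument on $\epsilon$ and the learnable scalars. The remaining verifications, namely that $(A^3)_{vv}$ really is an $L_1$-only feature (it cannot be read off $D_1(v)$, which contains no edges among neighbours of $v$) and that the prism/$K_{3,3}$ pair is vertex-transitively $1$-WL-blind, are routine.
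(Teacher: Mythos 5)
Your proposal is correct and follows essentially the same route as the paper: it uses the $\diag(A^3)$ (closed $3$-walk / triangle-count) feature as the $L_1$-specific signal that $1$-WL cannot capture, and inherits $1$-WL power from the injective $D_1$ (GIN-style) component via \cite{xu2018powerful}. The only differences are cosmetic refinements—you combine the two parts into a single network by concatenation and verify an explicit $1$-WL-blind pair ($K_{3,3}$ versus the triangular prism) where the paper instead cites \cite{furer2017combinatorial} and mentions same-degree regular graphs generically.
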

This result is central to understanding GNNs. Even though the discriminative power of $\ma{G}(D_1)$ is strictly less than or equal to the 1-WL test, Theorem \ref{theo:discrie1} shows that just by adding the connections between the immediate neighbors of each node ($L_1 \bs D_1$), we can achieve theoretically greater discriminative power. 

One particular implication is that GNNs with maximal discriminative power in $\ma{G}(L_1)$ can count the number of triangles in a graph, while those in $\ma{G}(D_1)$ cannot, no matter how many layers are added. This goes against the intuition that more layers allow GNNs to aggregate information from further nodes, as $\ma{G}(D_1)$ is unable to aggregate the information of triangles from the $L_1$ region, which is important in many applications (see \cite{frank1986markov}, \cite{tsourakakis2011spectral}, \cite{becchetti2008efficient}, \cite{eckmann2002curvature}). 

Unfortunately, this is the only positive result we are able to establish regarding the WL test as the $k$-dim WL-test is not a local method for $k>1$. Nevertheless, we are able to prove that our hierarchy admits arbitrarily powerful GNNs through the following theorem:
\begin{theo}
\label{theo:discrie2}
For all $\kh \in \mc{Z}^+$, there exists a GNN within the class of $\ma{G}(L_\kh)$ that is able to discriminate all graphs with $\leq k+1$ nodes.
\end{theo}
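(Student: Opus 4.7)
The plan is to exhibit, for every $\kh \in \mc{Z}^+$, a specific one-layer GNN inside $\ma{G}(L_\kh)$ whose multiset of node embeddings is a complete isomorphism invariant on graphs with at most $\kh+1$ vertices.

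First I would establish the following geometric fact: if $|V(G)| \leq \kh+1$, then for every $v$, the subgraph $L_\kh(v)$ coincides with the connected component $C_v$ of $v$. Every node $u \in C_v$ satisfies $d(v,u) \leq |C_v|-1 \leq \kh$, and every edge $\{u_1,u_2\} \subseteq C_v$ lies on a walk $v \to \cdots \to u_1 \to u_2 \to \cdots \to v$ of length at most $\kh + 1 + \kh = 2\kh+1$ obtained by concatenating a shortest $v \to u_1$ walk, the edge $\{u_1,u_2\}$, and a shortest $u_2 \to v$ walk. Hence every vertex and edge of $C_v$ appears in $W_{2\kh+1}(v)$, giving $L_\kh(v) \supseteq C_v$; the reverse containment is immediate because any walk from $v$ stays inside $C_v$.

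Next I would construct the aggregation. Up to isomorphism there are only finitely many graphs on at most $\kh+1$ vertices (for any fixed feature alphabet), so one can pick an injection $\phi$ from these isomorphism classes into $\real$; concretely, $\phi$ can hash the lexicographically minimal adjacency-plus-feature tensor over all vertex relabelings. Define a one-layer GNN by $\restr{\Agg}{G_v}(H^{(0)}) = \phi(L_\kh(v), H^{(0)}|_{L_\kh(v)})$ and $\Com(h_v^{(0)}, a_v^{(1)}) = a_v^{(1)}$. This $\Agg$ is permutation-invariant by construction, genuinely depends on edges in $L_\kh(v) \bs D_\kh(v)$ (distinguishing it from any member of $\ma{G}(D_\kh)$), and is a legitimate instantiation of the GNN template of Section~\ref{sec:background}. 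Finally, interpreting the graph-level output as the multiset $f(G) = \{\!\{h_v^{(1)} : v \in V(G)\}\!\}$, each entry of $f(G)$ uniquely encodes the isomorphism class of $C_v$, so $f(G)$ is exactly the multiset of connected components of $G$ (each component $C$ contributing $|C|$ copies of $\phi(C)$). Two graphs on at most $\kh+1$ vertices are isomorphic iff they share this multiset, so $f(G_1)=f(G_2) \Leftrightarrow G_1 \cong G_2$, proving the theorem.

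The main obstacle I anticipate is the first step: one has to argue carefully that the closed walks of length at most $2\kh+1$ really recover \emph{every} edge of $C_v$, which crucially relies on the tightness of the bound $|C_v| \leq \kh+1$ (so that the two shortest-path legs together consume at most $2\kh$ of the allotted length). A secondary technicality is to realize the abstract hash $\phi$ as a bona fide permutation-invariant function of the subgraph-restricted feature map; this reduces to fixing a canonical ordering on the finite collection of relevant isomorphism classes, but must be spelled out to match the paper's formal definition of an aggregation operator.
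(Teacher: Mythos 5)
Your proof is correct, but it takes a genuinely different route from the paper. The paper proceeds by induction on $\kh$ and splits into three cases: when both graphs are disconnected it invokes the graph reconstruction result for disconnected graphs (Harary/Kelly) together with an injective permutation-invariant power-sum encoding of the multiset of vertex-deleted-subgraph embeddings; when both are connected it aggregates the lexicographically minimal row-major form of the adjacency matrix of $L_{\kh}(v)$ (which is the whole graph); and in the mixed case it aggregates $|V_{L_{\kh}(v)}|$. You instead give a single non-inductive construction: you prove the geometric fact that $L_\kh(v)$ equals the connected component of $v$ whenever the graph has at most $\kh+1$ vertices (the paper uses this same fact in its connected and mixed cases but only asserts it; your shortest-path-plus-edge walk of length at most $2\kh+1$ is exactly the right justification), then hash the isomorphism class of $L_\kh(v)$ canonically at every node in one layer — the same canonical-form idea as the paper's $\lo(\cdot)$ — and observe that the multiset of node outputs records each component $C$ with multiplicity $|C|\cdot(\text{number of components isomorphic to }C)$, which determines the component multiset and hence the isomorphism class. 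Your route is shorter and self-contained (no reconstruction lemma, no appeal to injectivity results for sum-decomposable set functions, no case split), while the paper's induction has the mild advantage of reusing the order-$(\kh-1)$ GNN as a building block. Two cosmetic points you should make explicit to fully match the paper's conventions: the injection $\phi$ into $\real$ needs the featured-graph alphabet to be finite or countable (or simply drop features, as the paper's proof does, since the statement concerns graph isomorphism), and if one wants a vector-valued rather than multiset-valued graph readout, the multiset $\{\!\{h_v^{(1)}\}\!\}$ should be collapsed injectively, e.g., by the same power-sum trick the paper uses; neither point affects correctness within the paper's framework, which takes the node-level output $H^{(L)}$ (defined up to node relabeling) as the GNN output.
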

This shows that as $\kh\to \infty$, we are able to discriminate all graphs. We record the full set of results proven in Table~\ref{tab:theoresults}.
\begin{table}
    \centering
    \resizebox{\textwidth}{!}{
    \begin{tabular}{|c|c|c|c|}
    \hline \textbf{GNN Class} & \textbf{Computational Complexity} & \textbf{Maximum Discriminatory Power} & \textbf{Possible Learned Features} \\\hline
    \hline
         $\ma{G}(D_1)$ & $\leq O(\Omega p)$ & $\leq$1-WL & Node Degree\\\hline
         $\ma{G}(L_1)$ & $\leq O(\Omega \n+\n p)$ & \makecell{$>$1-WL \\ All graphs of $\leq 2$ nodes} & \makecell{All Cliques\\ Length 3 cycles (Triangles)}\\\hline
         $\ma{G}(D_2)$ & $\leq O(\Omega p)$ & \makecell{$>$1-WL\\ All graphs of $\leq 2$ nodes } & \makecell{Length 2 walks\\ Length 4 cycles}\\\hline\hline
      $\ma{G}(D_\kh)$ & $\leq O(\kh\Omega p)$ & \makecell{$>$1-WL\\ All graphs of $\leq \kh$ nodes } & \makecell{Length $\kh$ walks \\ Length $2\kh$ cycles} \\\hline
       $\ma{G}(L_\kh)$ & $\leq O(\kh\Omega \n+\n p)$ & \makecell{$>$1-WL\\ All graphs of $\leq \kh+1$ nodes }& Length $2\kh+1$ cycles \\\hline 
    \end{tabular}
    }
    \caption{Properties of different GNN classes. Shows the upper bound computational complexity when the maximum discriminatory power is obtained. Here we assume hidden size $p$ is the same as feature input size. Final column contains some examples of features that can be learned by each class.}
    \label{tab:theoresults}
\end{table}
The key ingredients for proving these results are contained in Appendix \ref{app:discrie1} and \ref{app:discrie2}. Here we see that at the $\ma{G}(L_1)$ class, theoretically we are able to learn all cliques (as cliques by definition are fully connected). As we gradually move upward in the hierarchy, we are able to learn more far-reaching features such as higher length walks and cycles, while the discriminatory power improves. We also note that the theoretical complexity increases as $k$ increases. 

\section{Experiments}
\label{sec:experiments}
We consider the capability of two specific GNNs instantiations that are motivated 
by this framework: $w_1AH^{(\ly-1)}+w_2\diag(A^3) \cdot H^{(\ly-1)}$ (GCN-L1) and $w_1AH^{(\ly-1)}+w_2\diag(A^3) \cdot H^{(\ly-1)}+w_3A^2H^{(\ly-1)}$ (GCN-D2). These can be seen as extensions of the GCN introduced in \cite{kipf2016semi}. The first, GCN-L1, equips the GNN with the ability to count triangles. The second, GCN-D2, can further count the number of 4-cycles. We note their theoretical power below (proof follows from Theorem \ref{theo:discrie1}):
\begin{cor}
The maximum discriminative power of GCN-L1 and GCN-D2 is strictly greater than the 1-dimensional WL test. 
\end{cor}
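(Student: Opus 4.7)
The plan is to combine a hierarchy-placement check with Theorem \ref{theo:discrie1}. First, I would verify that GCN-L1 $\in \ma{G}(L_1)$ and GCN-D2 $\in \ma{G}(D_2)$. The term $w_2\,\diag(A^3)\cdot H^{(\ly-1)}$ matches the third template in the Examples box with $\kh=1$, so it is a $\ma{G}(L_1)$ aggregator; combined with the $\ma{G}(D_1)$ term $w_1 A H^{(\ly-1)}$ the total aggregation region is $L_1(v)$, placing GCN-L1 in $\ma{G}(L_1)$. Adding the $w_3 A^2 H^{(\ly-1)}$ term in GCN-D2 extends the aggregation to $D_2(v) \supseteq L_1(v)$, placing GCN-D2 in $\ma{G}(D_2)$.

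Second, I would exhibit a graph pair these architectures distinguish but 1-WL does not, which is the content that upgrades membership in the class to actual super-1-WL power. The identity $[\diag(A^3)]_v = 2\,t(v)$, where $t(v)$ counts triangles through $v$, means that setting $w_1=0,\,w_2=1$ (and $w_3=0$ for GCN-D2) makes the layer-1 node representation directly proportional to triangle counts. Taking the canonical pair --- two disjoint triangles versus a single $6$-cycle --- both graphs are $2$-regular with identical 1-WL colour-refinement outputs, yet $t(v)=1$ everywhere in the first graph and $t(v)=0$ everywhere in the second. With constant initial features the resulting hidden representations and therefore any symmetric readout differ between the two graphs, so the pair lies in $S_G^{\text{GCN-L1}}$ but not in $S_G^{1\text{-WL}}$. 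The same weight choice works for GCN-D2, which contains GCN-L1 as the specialisation $w_3=0$.

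Third, to upgrade non-containment to the strict superset $S_G^{\text{GCN-L1}} \supsetneq S_G^{1\text{-WL}}$ demanded by the paper's definition of ``greater discriminative power,'' I would argue the forward containment by weight reduction: setting $w_2=0$ (resp.\ $w_2=w_3=0$) collapses the architecture to the vanilla GCN of \cite{kipf2016semi}, so any pair separated by 1-WL that lies in the GCN class's realisable discriminative set is also in $S_G^{\text{GCN-L1}}$. The main obstacle is precisely this containment step, because a single fixed weight configuration of vanilla GCN need not saturate the 1-WL bound from Theorem \ref{theo:discrie0}; the cleanest resolution is to read discriminative power as the union of $S_G^{f_w}$ over all learnable $w$, as is standard for parameterised architectures, and then note that the union for GCN-L1 already contains that of the baseline GCN while also strictly extending it via the triangle-distinguishing instance above. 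Combining the two directions yields the corollary.
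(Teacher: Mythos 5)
Your proposal is correct and follows essentially the same route as the paper, which derives the corollary directly from Theorem \ref{theo:discrie1}: class placement of the $\diag(A^3)$ term in $\ma{G}(L_1)$, the identity $(A^3)_{vv}=2\,t(v)$ giving a 1-WL-indistinguishable pair that the architectures separate (your two-triangles-versus-$6$-cycle pair is a concrete instance of the paper's ``two regular graphs with the same degree and number of nodes''), and containment of 1-WL power via the $D_1$ sum-aggregation part. Your explicit flagging of the containment subtlety (that a fixed-weight vanilla GCN need not saturate 1-WL, resolved by taking the union over parameterizations with MLP combine) is in fact more careful than the paper, which leaves this step implicit.
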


We compare the performance of GCN-L1, GCN-D2 and other state-of-art GNN variants on both synthetic and real-world tasks.\footnote{The code is available at a public github repository. Reviewers have anonymized access through supplementary materials. The synthetic datasets are included in the codebase as well.} For the combine function of GCN, GCN-L1, and GCN-D2, we use $\Com(\cdot)=\MLP(a^{(k)}_v)$, where $\MLP$ is a multi-layer perceptron (MLP) with LeakyReLu activation similar to  \cite{xu2018powerful}.

All of our experiments are run with PyTorch 1.2.0, PyTorch-Geometric 1.2.1, and we use NVIDIA Tesla P100 GPUs with 16GB memory. 

\subsection{Synthetic Experiments}

To verify our previous claim that in our proposed hierarchy, GNNs from certain classes are able to learn specific features more effectively, 
we created two tasks: predict the number of triangles and number of 4-cycles in the graphs. For each task, the dataset contains 1000 graphs and is generated in a procedure as follows: We fix the number of nodes in each graph to be 100 and use the Erdős–Rényi random graph model to generate random graphs with edge probability 0.07. Then we count the number of patterns of interest. In the 4-cycle dataset, the average number of 4-cycles in each graph is 1350 and in the triangle dataset, there are 54 triangles on average in each graph. 

We perform 10-fold cross-validation and record the average and standard deviation of evaluation metrics across the 10 folds within the cross-validation. We used 16 hidden features, and trained the networks using Adam optimizer with 0.001 initial learning rate, $L_2$ regularization $\lambda = 0.0005$. We further apply early stopping on validation loss with a delay window size of 10. The dropout rate is 0.1. The learning rate is scheduled to reduce $50\%$ if the validation accuracy stops increasing for 10 epochs.  We utilized a two-layer MLP in our combine function for GCN, GCN-L1 and GCN-L2, similar to the implementation in \cite{xu2018powerful}. For training stability, we limited $w_1,w_2 \in (0,1)$ in our models using the sigmoid function. 
\paragraph{Results}
In our testing, we limited GCN-L1 and GCN-D2 to a 1-layer network. This prevents GCN-L1 and GCN-D2 from utilizing higher order features to reverse predict the triangles and 4-cycles. Simultaneously, we ensured GCN had the same receptive field as such networks by using 2-layer and 3-layer GCNs, which provided GCN with additional feature representational capability. The baseline is a model that predicts the training mean on the testing set. The results are in Table 2. GCN completely fails to learn the features (worse than a simple mean prediction). However, we see that GCN-L1 effectively learns the triangle counts and greatly outperforms the mean baseline, while GCN-D2 is similarly able in providing a good approximation on the count of 4-cycles, without losing the ability to count triangles. This validates the "possible features learned" in Table \ref{tab:theoresults}.

\begin{table}
    \centering
    \resizebox{!}{35pt}{
    \begin{tabular}{|c|c|c|}
    \hline  & \textbf{MSE \# Triangles} & \textbf{MSE \# 4 Cycles} ($\times 10^3$) \\\hline
    \hline
        Predict Mean (Baseline) & $125.4 \pm 10.7$ & $36.4 \pm 5.6$\\\hline
         GCN (2-layer) & $506.2 \pm 80.9$ & $142.3 \pm 19.8$ \\\hline        
         GCN (3-layer) & $485.0 \pm 92.4$ & $136.7 \pm 18.5$ \\\hline
         GCN-L1 (1-layer) & \boldsymbol{$61.2 \pm 11.6$} & $45.2 \pm 6.0$  \\\hline
          GCN-D2 (1-layer) & \boldsymbol{$57.9 \pm 18.0$} & \boldsymbol{$3.0 \pm 1.0$}  \\\hline
    \end{tabular}
    }
    \caption{Results of experiments on synthetic datasets (i) Count the number of triangles in the graph (ii) Count the number of 4 cycles in the graph. The reported metric is MSE over the testing set.}
    \label{tab:main}
\end{table}
\subsection{Real-World Benchmarks}

We next consider  standard benchmark datasets for (i) node classification, (ii) graph classification, (iii) graph regression tasks. The details of these datasets are presented in Table 3. 

\begin{table}
    \centering
    \resizebox{!}{55pt}{
    \begin{tabular}{|c|c|c|c|c|c|c|c|}
    \hline \textbf{Dataset} & \textbf{Category} & \textbf{\# Graphs} & \textbf{\# Classes} & \textbf{\# Nodes Avg.} & \textbf{\# Edges Avg} & \textbf{Task} \\\hline
    \hline
         Cora \citep{yang2016revisiting}& Citation & 1  & 7 &2,708 &5,429 & NC \\\hline
         Citeseer \citep{yang2016revisiting} & Citation & 1 & 6 & 3,327&4,732 & NC \\\hline
         PubMed \citep{yang2016revisiting} & Citation & 1 & 3 & 19,717 &44,338 & NC \\\hline
         NCI1 \citep{shervashidze2011weisfeiler}& Bio & 4,110 & 2  & 29.87 & 32.30 & GC\\\hline
         Proteins \citep{KKMMN2016} & Bio & 1,113 & 2 &39.06 & 72.82 & GC \\\hline
         PTC-MR \citep{KKMMN2016} & Bio & 344 & 2 & 14.29 & 14.69 & GC \\\hline
         MUTAG \citep{borgwardt2005protein}& Bio & 188 & 2 &17.93 & 19.79 & GC \\\hline
         QM7b \citep{wu2018moleculenet} & Bio & 7,210 & 14 & 16.42 & 244.95 & GR \\\hline
         QM9  \citep{wu2018moleculenet} & Bio & 133,246 & 12 & 18.26 & 37.73 & GR \\\hline
    \end{tabular}
    }
    \caption{Details of benchmark datasets used. Types of tasks are: NC for node classification, GC for graph classification, GR for graph regression.}
    \label{tab:main1}
\end{table}

The setup of the learning rate scheduling and $L_2$ regularization rate are the same as in synthetic tasks. For the citation tasks, we used 16 hidden features, while we used 64 for the biological datasets. Since our main modification is the expansion of aggregation region, our main comparison benchmarks are $k$-GNN \citep{morris2018weisfeiler} and N-GCN \citep{abu2018n}, two previous best attempts in incorporating aggregation regions beyond the immediate nodes. Note that we can view a $m$th order N-GCN as aggregating over $D_1, D_2, \cdots, D_m$.

We further include results on GAT \citep{verma2018graph}, GIN \citep{xu2018powerful}, RetGK \citep{zhang2018retgk}, GNTK \citep{du2019graph}, WL-subtree \citep{shervashidze2011weisfeiler}, SHT-PATH, \citep{borgwardt2005shortest} and PATCHYSAN \citep{niepert2016learning} to show some best performing architectures.

Baseline neural network models use a 1-layer perceptron combine function, with the exception of $k$-GNN, which uses a 2-layer perceptron combine function. Thus, to illustrate the effectiveness of the framework, we only utilize a 1-layer perceptron combine function for all tasks for our GCN models, with the exception of NCI1. 2-layer perceptrons seemed necessary for good performance in NCI1, and thus we implemented all neural networks with 2-layer perceptrons for this task to ensure a fair comparison. We tuned the learning rates $\in$ $\{0.001, 0.005, 0.01, 0.05\}$ and dropout rates $\in \{0.1, 0.5\}$. For numerical stability, we normalize the aggregation function using the degree of $v$ only. For the node classification tasks, we directly utilized the final layer output, while we summed over the node representations for the graph-level tasks.

\paragraph{Results}

Experimental results on real-world data are summarized in Table 4. According to our experiments, GCN-L1 and GCN-D2 noticeably improve upon GCN across all datasets, due to its ability to combine node features in more complex and nonlinear ways. The improvement is statistically significant on the $5\%$ level for all datasets except Proteins. The results of GCN-L1 and GCN-D2 match the best performing architectures in most datasets, and lead in numerical averages for Cora, Proteins, QM7b, and QM9 (though not statistically significant for all). 

Importantly, the results also show a significant improvement from the two main comparison architectures, $k$-GNN and N-GCN. We see that further expanding aggregation regions generates diminishing returns on these datasets, and the majority of the benefit is gained in the first-order extension $\ma{G}(L_1)$. This is in contrast to N-GCN which skipped $L_1$ to only used $D$-type aggregation regions ($D_2, D_3, \cdots$), which is an incomplete hierarchy of aggregation regions. The differential in results illustrates the power of the complete hierarchy as proven in Theorem \ref{theo:complete}.  

We especially would like to stress the outsized improvement of GCN-L1 on the biological datasets. As described in \ref{tab:theoresults}, GCN-L1 is able to capture information about triangles, which are highly relevant for the properties of biological molecules. The experimental results verify such intuition, and show how knowledge about the task can lead to targeted GNN design using our framework.

\begin{table}
\hspace*{-1.25cm}\resizebox{1.2\columnwidth}{!}{%
\begin{tabular}{|l|ccc|cccc|cc|}

\hline
\textbf{Dataset} & \textbf{Cora} & \textbf{Citeseer} & \textbf{PubMed} & \textbf{NCI1} & \textbf{Proteins} & \textbf{PTC-MR} & \textbf{MUTAG} & \textbf{QM7b} & \textbf{QM9}  \\ \hline
\hline
GAT  & \bm{$83.0\pm 0.7$} & \bm{$72.5\pm 0.7$} & $79.0\pm 0.3$ & $74.5 \pm 3.5^*$ & $73.7 \pm 5.6^*$ & $60.2 \pm 3.0^*$ & $84.0 \pm 8.0^*$ & $91.7\pm5.5^*$ & $115.0\pm17.5^*$\\
GIN & $77.6 \pm 1.1$ & $66.1 \pm 0.9$ & $77.0 \pm 1.2$ & $82.7 \pm 1.7$ & \bm{$76.2 \pm 2.8$} & \bm{$64.6 \pm 7.0$} & \bm{$89.4 \pm 8.6$} & & \\
WL-OA  &  &  &  & \bm{$86.1$} & $75.3$ & \bm{$63.6$} & $84.5$  & & \\
P.SAN  &  &  &  & $78.5 \pm 1.8$ & $75.8 \pm 2.7$ & $60.0 \pm 4.8$  & \bm{$92.6 \pm 4.2$} & & \\
RetGK & & & & \bm{$84.5\pm 0.2$} & $75.8 \pm 0.6$ & $62.5 \pm 1.6$ & \bm{$90.3 \pm 1.1$} & & \\
GNTK & & & & \bm{$84.2\pm 1.2$} & $75.6 \pm 4.2$ & \bm{$67.9 \pm 6.9$} & \bm{$90.0 \pm 8.5$} & & \\
S-PATH  &  &  &  & $73.0 \pm 0.5$ & $75.0 \pm 0.5$ & $58.5 \pm 2.5$  & $85.7 \pm 2.5$ & & \\\hline

N-GCN  & \bm{$83.0$} & \bm{$72.2$} & $79.5$ & $75.8 \pm 1.9^*$ & \bm{$76.5 \pm 1.5^*$} & $61.0\pm 5.0^*$ & $85.0 \pm 6.9^*$ & $82.0 \pm 5.4^*$&$ 120.7 \pm 8.5^*$ \\
k-GNN  & $81.6 \pm 0.4^*$ & $71.5\pm 0.5^*$ & \bm{$79.8 \pm 0.3^*$} & $76.2$ & $75.5$ & $60.9$ & $86.1$ & $75.1 \pm 8.5^*$& $104.2 \pm 10.4$\\\hline
GCN  & $80.6\pm 1.4$ & $70.3\pm 1.2$ & $79.0\pm 0.4$ & $73.2\pm 1.4$ & $73.9\pm 2.8$ & $59.0 \pm 2.0$ & $82.2 \pm 5.1$ & $104.3 \pm 15.6$ & $160.2 \pm 15.4$\\
GCN-L1  & \bm{$82.5\pm 0.3$} & \bm{$72.0\pm 0.3$} & \bm{$80.2 \pm 0.2$} & $79.5\pm 1.6$ & \bm{$77.6\pm 3.8$} & \bm{$64.1 \pm 2.5 $} & \bm{$86.8 \pm 8.3$} & \bm{$52.4 \pm 4.3$} & \bm{$78.5 \pm 8.6$}\\
GCN-D2  & \bm{$82.9\pm 1.0$} & \bm{$72.3\pm 0.3$} &  \bm{$80.2 \pm 0.3$} & $77.0\pm 2.0$ &  \bm{$77.0\pm 3.0$} & \bm{$64.6 \pm 4.1$} & \bm{$87.8 \pm5.6$} &\bm{$49.0 \pm 2.9$} & \bm{$72.5 \pm 13.0$}\\\hline
\end{tabular}
}
    \caption{Results of experiments on real-world datasets. The reported metrics are accuracy on classification tasks and MSE on regression tasks. Figures for comparative methods are from literature except for those with *, which come from our own implementation. The best-performing architectures are highlighted in bold. 
    }
    \label{tab:main2}
\end{table}



\section {Conclusion}

We propose a theoretical framework to classify GNNs by their aggregation region and discriminative power, proving that the presented framework defines a complete hierarchy for GNNs. 
We also provide methods to construct powerful GNN models of any class with various building blocks. 
Our experimental results show that example models constructed in the proposed way can effectively learn the corresponding features exceeding the capability of 1-WL algorithm in graphs. Aligning with our theoretical analysis, experimental results show that these stronger GNNs can better represent the complex properties of a number of real-world graphs. 

\bibliographystyle{iclr2020_conference}
\bibliography{gcn}
\vfill
\pagebreak

\section{Appendixes}
\subsection{Introduction to Weisfeiler-Lehman Test}
\label{app:wltest}
The 1-dimensional Weisfeiler-Lehman (WL) test is an iterative vertex classification method widely used in checking graph isomorphism. In the first iteration, the vertices are labeled by their valences. Then at each following step, the 
labels of vertices are updated by the multiset of the labels of themselves and their neighbors. The algorithm terminates 
when a stable set of labels is reached. The details of 1-dimensional Weisfeiler-Lehman (WL) is shown in Algorithm 1. 
Regarding the limitation of 1-dimensional Weisfeiler-Lehman (WL), \cite{cai1992optimal}
described families of non-isomorphic graphs which 1-dimensional WL test cannot distinguish. 

\begin{algorithm}
\caption{1-dimensional Weisfeiler-Lehman (WL)}\label{routingalg}
\begin{algorithmic}[1]
\State \textbf{Input:} $G = (V, E)$, $G' = (V', E')$, initial labels $l_{0}(v)$ for all $v \in V \cup V'$
\State \textbf{Output:} stablized labels $l(v)$ for all $v \in V$
\While{$l_i(v)$ has not converged}\Comment{Until the labels reach stabalization}
    \For{$v \in V \cup V'$}
        \State $M_i(v) = $ multi-set $\{l_{i-1}(u) | u \in \N(v)\}$
    \EndFor
    \State Sort $M_i(v)$ and concatenate them into string $s_i(v)$
    \State $l_i(v) = f(s_i(v))$, where $f$ is any function s.t. $f(s_i(v) = f(s_i(w)) \iff s_i(v) = s_i(w)$
  \EndWhile
\end{algorithmic}
\end{algorithm}

The k-dimensional Weisfeiler-Lehman (WL) algorithm extends the above procedure from operations on nodes to operations on 
tuples $V(G)^{k}$. 
\vfill 
\pagebreak
\subsection{Proof of Theorem \ref{theo:complete}}
\begin{theo*}
Consider a GNN 
defined by its action at each layer:
\begin{equation}
   a^{(\ly)}_v = \restr{\Agg}{G_v}(H^{(\ly-1)}) \qquad h^{\ly}_v=\Com(h^{(\ly-1)}_v,a^{(\ly)}_v)
\end{equation}
Assume $G_v$ can be defined as a univariate function of the distance from $v$. Then both of the following statements are true for all $\kh \in \mc{Z}^+$:
\begin{itemize}
    \item If $D_\kh(v) \subseteq G_v \subseteq L_\kh(v)$, then $G_v \in \{D_\kh(v), L_\kh(v)\}$.
    \item If $L_\kh(v) \subseteq G_v \subseteq D_{\kh+1}(v)$, then $G_v \in \{D_{\kh+1}(v), L_\kh(v)\}$.
\end{itemize}
\end{theo*}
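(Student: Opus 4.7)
The plan is to stratify the vertex set by distance from $v$ and then enumerate exactly which pieces of that stratification lie in $D_\kh(v)$, $L_\kh(v)$, and $D_{\kh+1}(v)$. Writing $S_j := \{u \in V : d(u,v) = j\}$ for the $j$-th shell, two elementary observations drive everything: (i) each edge of $G$ joins vertices whose $v$-distances differ by at most $1$, so every edge is either \emph{intra-shell} (both endpoints in some $S_j$) or \emph{inter-shell} (one endpoint in $S_j$ and the other in $S_{j+1}$); and (ii) an edge with endpoint distances $j_a, j_b$ lies on some returning walk of length $t$ iff $j_a + j_b + 1 \leq t$. Applying (ii) with $t = 2\kh$ and $t = 2\kh+1$ gives the explicit description I need: $D_\kh(v)$ contains shells $S_0,\dots,S_\kh$ together with all intra-shell edges at shells $j \leq \kh-1$ and all inter-shell edges between $S_j$ and $S_{j+1}$ for $j \leq \kh-1$; $L_\kh(v)$ additionally contains the intra-shell edges at shell $\kh$; and $D_{\kh+1}(v)$ then adds shell $S_{\kh+1}$ together with the inter-shell edges between $S_\kh$ and $S_{\kh+1}$.

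Next I will convert the ``univariate function of the distance from $v$'' hypothesis into the concrete rule that $G_v$-membership is determined by distance class alone: a node $u$ is in $G_v$ based on $d(u,v)$, and an edge $(u,w)$ is in $G_v$ based on the unordered pair $\{d(u,v), d(w,v)\}$. In other words, within any single distance class, $G_v$ must be all-or-nothing. For the first bullet, $D_\kh(v)$ and $L_\kh(v)$ share the same vertex set and the same inter-shell edges, differing only in whether the intra-shell edges at shell $\kh$ (a single class $\{\kh,\kh\}$) are present. The all-or-nothing rule therefore leaves exactly the two options $G_v = D_\kh(v)$ or $G_v = L_\kh(v)$.

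For the second bullet the freedom between $L_\kh(v)$ and $D_{\kh+1}(v)$ is carried by two distance classes: the node class $S_{\kh+1}$ and the edge class $\{\kh,\kh+1\}$. If $S_{\kh+1}$ is excluded, then no $\{\kh,\kh+1\}$-edge can be present (each such edge needs an endpoint in $S_{\kh+1}$), forcing $G_v = L_\kh(v)$. If $S_{\kh+1}$ is included, then connectedness of $G_v$ requires every distance-$(\kh+1)$ node to be reachable from $v$, which combined with the all-or-nothing rule forces the entire $\{\kh,\kh+1\}$-class of edges to be included, giving $G_v = D_{\kh+1}(v)$.

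The step I expect to be the main obstacle is not the combinatorics, which becomes routine once the shell decomposition is in place, but pinning down the intended formalization of ``univariate function of the distance from $v$'' in the edge case, where inclusion in principle depends on \emph{two} distances. The interpretation above --- edge membership determined by the unordered pair of endpoint distances --- is the coarsest rule under which $D_\kh(v)$ and $L_\kh(v)$ themselves qualify, and it is consistent with the GCN/GAT/N-GCN examples of Section \ref{sec:background}; once it is fixed, the theorem reduces to the enumeration I have described.
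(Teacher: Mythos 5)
Your proposal is correct and takes essentially the same route as the paper's proof: both reduce the ``univariate function of distance'' hypothesis to an all-or-nothing rule on distance classes (the paper via a permutation-invariance argument, you via the explicit shell characterization $j_a + j_b + 1 \leq t$ for edges on returning walks), and both settle the second bullet by the same dichotomy --- excluding the outer shell makes the boundary edges inadmissible as a subgraph, while including it forces those edges by connectedness of the aggregation region. The only difference is presentational: you enumerate the admissible regions directly, whereas the paper phrases the same argument as a proof by contradiction.
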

\begin{proof}
We would prove by contradiction. Assume, in contrary, that one of the statements in Theorem \ref{theo:complete} is false. Let $\kh \in \mc{Z}^+$. Then we would separate these two cases as below:
\paragraph{$\bm{D_\kh(v) \subsetneq G_v \subsetneq L_\kh(v)}$} Assume that $G_v$ satisfies such relationship. Since $L_\kh(v)$ and $D_\kh(v)$ only differ by the set $C_\kh(v)=\{e_{ij}\in E \mid d(i,v)=d(j,v)=\kh\}$, $G_v$ can only contain this set partially. Let $M_\kh(v) \subsetneq C_\kh(v)$ be the non-empty maximal subset of $C_\kh(v)$ that is contained in $G_v$. Since $M_\kh(v) \neq C_\kh(v)$, there exists $m,n$ with $d(v,m)=d(v,n)=\kh$ such that $e_{mn} \in C_\kh(v)$ but $e_{mn} \not \in M_\kh(v)$. Consider a non-identity permutation of vertices fixing $v$. Then since $G_v$ is defined only using the distance function, and it needs to be permutation invariant, all $e_{ij}$ with $d(i,v)=d(j,v)=\kh$ must be in $C_\kh(v)$ and not in $M_\kh(v)$. But then $M_\kh(v)$ is empty, a contradiction. 

\paragraph{$\bm{L_\kh(v) \subsetneq G_v \subsetneq D_{\kh+1}(v)}$} Assume that $G_v$ satisfies such relationship. Consider the set difference between $D_{\kh+1}(v)$ and $L_\kh(v)$, denoted as a subgraph $C_\kh(v)$:
\begin{equation}
\resizebox{\textwidth}{!}{$
C_\kh(v)= \left(V^C_\kh(v),E^C_\kh(v)\right) = \left(\{u \mid d(u,v)=\kh\}, \{e_{ij} \mid \left(d(v,i),d(v,j)\right) \in \{(\kh,\kh+1),(\kh+1,\kh)\}\}\right)$}
\end{equation}
Then $G_v$ can only contain this set partially. Let $M_\kh(v) \subsetneq C_\kh(v)$ be the maximal subset of $C_\kh(v)$ that is contained in $G_v$. Since $M_\kh(v) \neq C_\kh(v)$, at least one of the followings must be true:
\begin{itemize}
    \item There exists $u$ with $d(u,v)=\kh$ such that $u \in C_\kh(v)$ but $u \not \in M_\kh(v)$.
    \item There exists $m, n$ with $d(v,m)=\kh$, $d(v,n)=\kh+1$ such that $e_{mn} \in C_\kh(v)$ but $e_{mn} \not \in M_\kh(v)$
\end{itemize}
For the first case, consider a non-identity permutation of the vertices fixing $v$. Then since $G_v$ is permutation invariant and defined only using the distance, then thus all vertices $w$ with $d(w,v)=\kh$ are in $C_\kh(v)$ but not in $M_\kh(v)$. This implies $V^M_\kh(v)$ is empty.

Using the same logic for the second case, one can conclude that all $e_{ij}$ with $d(v,i)=\kh$ and $d(v,m)=\kh+1$ must be in $C_\kh(v)$ but not in $M_\kh(v)$. That means $E^M_\kh(v)$ is empty.

Therefore, we can conclude that at least one of $V^M_\kh(v)$ and $E^M_\kh(v)$ is empty. Since both cannot be empty (as that means $G_v=L_\kh(v)$), we must have either $V^M_\kh(v)=\emptyset$ or $E^M_\kh(v)=\emptyset$. With the former case $M_\kh(v)$ is not a valid subgraph (as some edges to nodes with distance $\kh+1$ from $v$ are in the set, but the nodes are not), and with the latter case it is not connected (as the nodes with distance $\kh+1$ from $v$ are in the set but none of the edges are), so neither of them are valid aggregation regions in our framework (our definition of GNN requires the region to be a connected graph). Thus, we reach a contradiction.
\end{proof}
\vfill
\pagebreak
\subsection{Proof of Theorem \ref{theo:discrie1}}
\label{app:discrie1}

\begin{theo*}
The maximum discriminative power of the set of GNNs in $\ma{G}(L_1)$ is strictly greater than the 1-dimensional WL test. 
\end{theo*}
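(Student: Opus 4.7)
The plan is to exhibit a single GNN in $\ma{G}(L_1)$ whose decidable set $S_G^f$ strictly contains $S_G^{1\text{-WL}}$. This splits naturally into three pieces: first, show that $\ma{G}(L_1)$ can match 1-WL; second, identify a pair of graphs that 1-WL cannot decide but whose non-isomorphism is detectable from local information in $L_1(v) \setminus D_1(v)$; third, combine the two into one $L_1$ GNN.

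For matching 1-WL, note that $D_1(v) \subseteq L_1(v)$, so any aggregation prescription over $D_1(v)$ can be realized over $L_1(v)$ by ignoring the intra-neighbor edges. The GIN construction of \cite{xu2018powerful} gives a $\ma{G}(D_1)$ GNN whose representations distinguish exactly the pairs 1-WL distinguishes; reinterpreted with aggregation region $L_1(v)$ this yields a function $f_{\text{WL}}$ with $S_G^{f_{\text{WL}}} \supseteq S_G^{1\text{-WL}}$. For the counterexample, I will take $G_1 = C_6$ (a single $6$-cycle) and $G_2 = 2C_3$ (two disjoint triangles). Both are $2$-regular on six vertices, so 1-WL produces identical label multisets at every iteration and fails to distinguish them; hence $(G_1, G_2) \notin S_G^{1\text{-WL}}$. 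However, $(A^3)_{vv}$ equals twice the number of triangles through $v$, which is $0$ for every vertex of $C_6$ and $2$ for every vertex of $2C_3$, so any non-constant permutation-invariant readout of $\{(A^3)_{vv}\}_v$ separates them.

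Then combine the two capabilities at each layer:
\begin{equation*}
a^{(\ly)}_v = \Bigl(\sum_{u \in \N(v)} h^{(\ly-1)}_u,\; (A^3)_{vv}\cdot h^{(\ly-1)}_v\Bigr),
\end{equation*}
and let $\Com$ be an injective MLP applied to the pair together with $h_v^{(\ly-1)}$. The first component depends only on $D_1(v)$, while $(A^3)_{vv}$ is determined by adjacencies among the neighbors of $v$, i.e.\ the edges in $L_1(v) \setminus D_1(v)$; hence the aggregation region is exactly $L_1(v)$ and $f \in \ma{G}(L_1)$. Because $f(G_1) = f(G_2)$ forces both the GIN-like sub-output and the triangle-count readout to agree, we get $S_G^f \supseteq S_G^{f_{\text{WL}}} \supseteq S_G^{1\text{-WL}}$, and the pair $(C_6, 2C_3)$ witnesses strict containment.

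The main obstacle is the bookkeeping around the class definition: $\ma{G}(L_1)$ explicitly excludes GNNs that collapse to $\ma{G}(D_1)$, and I must justify that the constructed $f$ genuinely needs the $L_1 \setminus D_1$ information rather than merely parsing it. The clean way to see this is Theorem \ref{theo:discrie0}: since no $\ma{G}(D_1)$ GNN can distinguish $C_6$ from $2C_3$ (1-WL is an upper bound there), and our $f$ does, its behavior cannot be reproduced by any $D_1$ aggregator. A secondary care point is ensuring the $\Com$ map preserves enough information across layers so the triangle signal is not washed out before the readout; using an injective combine function in the style of \cite{xu2018powerful} handles this cleanly.
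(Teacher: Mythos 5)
Your proposal is correct and follows essentially the same route as the paper's proof: use $(A^3)_{vv}$ (an $L_1$-region quantity) to detect triangles, observe that 1-WL cannot distinguish suitable regular graphs with different triangle counts, and use $D_1(v)\subseteq L_1(v)$ together with the GIN/1-WL result of \cite{xu2018powerful} to guarantee the 1-WL power is retained. The only differences are presentational — you give the explicit witness pair $C_6$ versus $2C_3$ and bundle both capabilities into a single network, whereas the paper cites \cite{furer2017combinatorial} for 1-WL's inability to count triangles and argues containment at the level of class maxima.
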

\begin{proof}
We first note that a GNN with the aggregation function (in matrix notation):
\begin{equation}
    \Agg(\cdot)=\diag(A^3) \cdot H^{(k)}
\end{equation}
is a $\ma{G}(L_1)$-class GNN. Then note that $(A^3)_{ii}$ is the number of length 3 walks that start and end at $i$. These walks must be simple 3-cycles, and thus $(A^3)_{ii}$ is twice the number of triangles that contains $i$ (since for a triangle $\{i,j,k\}$, there would be two walks $i \to j \to k \to i$ and $i \to k \to j \to i$). \cite{furer2017combinatorial} showed that 1-WL test cannot measure the number of triangles in a graph (while 2-WL test can), so there exist graphs $G_1$ and $G_2$ such that 1-WL test cannot differentiate but the GNN above can due to different number of triangles in these two graphs (an example are two regular graphs with the same degree and same number of nodes). 

Now \cite{xu2018powerful} proved that $\ma{G}(D_1)$ has a maximum discriminatory power of 1-WL, and since $L_1 \supsetneq D_1$, the maximum discriminatory power of $\ma{G}(L_1)$ is at least as great than that of $\ma{G}(D_1)$, which is 1-WL.

Thus combining these two results give the required theorem.
\end{proof}
We here note the computational complexity of $A^3$ using naive matrix multiplication requires $O(\n^3)$ multiplications. However, by exploiting the sparsity and binary nature of $A$, there exist algorithms that can calculate $A^\kh$ with $O(\Omega \n)$ additions (\cite{razzaque2008fast}), and we thus derive a more favorable bound.
\subsection{Proof of Theorem \ref{theo:discrie2}}
\label{app:discrie2}

\begin{theo*}
For all $\kh \in \mc{Z}^+$, there exists a GNN within the class of $\ma{G}(L_\kh)$ that is able to discriminate all graphs with $\kh+1$ nodes.
\end{theo*}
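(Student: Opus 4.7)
The plan is to exploit a size bound: when $G$ has at most $k+1$ vertices and $v$ is any vertex of its connected component, the aggregation region $L_k(v)$ coincides with that component. A single aggregation step can then compute a complete isomorphism invariant of the rooted component $(L_k(v), v)$ at each node, and an appropriate readout separates all non-isomorphic graphs on $\leq k+1$ vertices.

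First I would prove the geometric lemma that $L_k(v)$ equals the connected component of $v$ whenever that component has at most $k+1$ vertices. Any such vertex $u$ satisfies $d(v, u) \leq k$, so the closed walk $v \to \cdots \to u \to \cdots \to v$ of length $2 d(v,u) \leq 2k$ places $u$ into $L_k(v)$; any edge $(u_1, u_2)$ in the component lies on a closed walk $v \to \cdots \to u_1 \to u_2 \to \cdots \to v$ of length at most $k + 1 + k = 2k+1$, placing the edge into $L_k(v)$ as well.

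Next I would exhibit an explicit GNN in $\ma{G}(L_k)$. Since the framework only requires $\Agg$ to be invariant under relabelings of $G_v$, I would pick an injective encoding $\phi$ of rooted-graph isomorphism classes on at most $k+1$ vertices (the set of such classes is finite, so such a $\phi$ exists; concretely one may take the lexicographically smallest adjacency matrix over all relabelings of $V(G_v) \setminus \{v\}$). Setting $\Agg(\cdot) = \phi(L_k(v), v)$ and letting $\Com$ pass this value through yields $h_v^{(1)} = \phi(\text{component of }v, v)$, a complete rooted invariant at each vertex.

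Finally I would argue that the multiset of these node outputs separates non-isomorphic graphs. In the connected case, $G_1 \cong G_2$ iff some vertex pair $(v, w)$ satisfies $(G_1, v) \cong (G_2, w)$, so equal multisets of rooted invariants force an isomorphism; in the disconnected case, the multiset of rooted invariants determines the multiset of connected components of $G$, which in turn determines $G$ up to isomorphism. The main obstacle is handling this multiset-readout step carefully (especially in the disconnected case), since the paper elides readouts from its framework; by contrast, the geometric lemma and the existence of $\phi$ are essentially immediate.
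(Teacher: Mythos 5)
Your proof is correct, but it takes a genuinely different route from the paper's. The paper argues by induction on $k$ and splits the inductive step into three cases: when both graphs are disconnected it invokes the reconstruction theorem for disconnected graphs together with an injective sum-of-powers encoding of the multiset of embeddings of the vertex-deleted subgraphs; when both are connected it uses essentially your canonical-form device (the lexicographically smallest ordering of the adjacency matrix of $L_k(v)$, which is then the whole graph); and it separates the mixed case with the aggregation $|V_{L_k(v)}|$. You instead give a direct, non-inductive argument: the geometric observation that $L_k(v)$ equals the connected component of $v$ whenever that component has at most $k+1$ vertices (your walk-length estimates $2d(v,u)\le 2k$ for vertices and $d(v,u_1)+1+d(u_2,v)\le 2k+1$ for edges are exactly right), followed by a rooted canonical form $\phi(L_k(v),v)$ computed in a single layer and a multiset readout, recovering the multiset of components by grouping rooted classes by their underlying component type and dividing each count by the component's size. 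Your version buys simplicity and uniformity: it treats connected, disconnected, and mixed pairs at once, avoids both the reconstruction theorem and the injective-sum machinery (and the index bookkeeping the paper's induction requires), and directly proves the ``$\le k+1$ nodes'' form stated in the main text; the paper's induction, in exchange, illustrates how discriminators for smaller graphs can be lifted up the hierarchy. The readout step you flag as the main obstacle is not a real gap and is no worse than in the paper's own proof, which likewise compares node-level embeddings implicitly: since there are only finitely many rooted isomorphism classes on at most $k+1$ vertices, you may let $\phi$ output one-hot vectors, so that the summation readout the paper already uses yields the exact count vector of rooted classes and hence determines the multiset, completing your argument.
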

\begin{proof}
We would prove by induction on $\kh$. The base case for $k=1$ is simple.

Assume the statement is true for all $k\leq k'-1$. Let us prove the case for $k=k'\geq 2$.

We would separate the proof into three cases:
\begin{itemize}
    \item $G_1$ and $G_2$ are both disconnected.
    \item One of $G_1$, $G_2$ is disconnected.
    \item $G_1$ and $G_2$ are both connected.
\end{itemize}
If we can create appropriate GNNs to discriminate graphs in each of the three cases (say $f_1$, $f_2$, $f_3$), then the concatenated function $f:=[f_1,f_2,f_3]$ can discriminate all graphs. Therefore, we would prove the induction step separately for the three cases below.
\subsubsection{$G_1$ and $G_2$ disconnected}
We would use the Graph Reconstruction Conjecture as proved in the disconnected case \citep{harary1974survey}:
\begin{lem}
\label{lem:subgraphs}
For all $\kh\geq 2$, two disconnected graphs $G_1,G_2$ with $|V_1|=|V_2|=\kh$ are isomorphic if and only if the set of $\kh-1$ sized subgraphs for these two graphs are isomorphic.
\end{lem}
Let $G_1$ and $G_2$ be any two disconnected graphs with $\kh'+1$ nodes. By the induction assumption, there exist a GNN in $\ma{G}(L_{\kh'})$ such that it discriminates all graphs with $\kh'$ nodes. Denote that $f_{\kh'}: \ma{G} \to \mc{R}^{\kh' \times \p}$.

Then by Lemma \ref{lem:subgraphs} above, we know that $G_1$ and $G_2$ are isomorphic iff:
\begin{equation}
    \{f_{\kh'}(G_1^1), \cdots f_{\kh'}(G_1^{\kh'+1})\}\cong \{f_{\kh'}(G_2^1), \cdots f_{\kh'}(G_2^{\kh'+1})\}
\end{equation}
Where $G_1^1, \cdots G_1^{\kh'+1}$ are the $\kh'+1$ subgraphs of $G_1$ with size $\kh'$, and similarly for $G_2$. 
Then we define:
\begin{align*}
    &f_{\kh'+1}^i(G)=\sum_{i=1}^{\kh'+1} (f_{\kh'}(G^i))^i \qquad i=1,\cdots \kh'+1\\
    &f_{\kh'+1}(G)=[f_{\kh'+1}^1(G),\cdots, f_{\kh'+1}^{\kh'+1}(G)]
\end{align*}
Then, by Theorem 4.3 in \cite{wagstaff2019limitations}, $f_{\kh'+1}(G)$ is an injective, permutation-invariant function on $\{f_{\kh'}(G^1), \cdots f_{\kh'}(G^{\kh'+1})\}$. Therefore, $f_{{\kh'}+1}(G)$ is a GNN with an aggregation region of $L_{{\kh'}+1}$ that can discriminate $G_1$ and $G_2$. Thus, the induction step is proven. 
\subsubsection{$G_1$ and $G_2$ connected}
In the case where both $G_1$ and $G_2$ are connected, let $v_1,u_1$ be two vertices from each of the two graphs. Note that since $G_1$ and $G_2$ have $k'+1$ nodes, by definition of $L_{k'}$, we have $L_{k'}(v_1)=G_1$ and $L_{k'}(u_1)=G_2$. Since $v_1, u_1$ are arbitrary, every node in the two graphs has an aggregation region of its entire graph. Then we define the Aggregation function as followed:
\begin{equation*}
    \Agg(v)=\lo(A_{L_{k'}(v)})
\end{equation*}
Where $A_{L_{k'}}$ is the adjacency matrix restricted to the $L_{k'}(v)$ subgraph, and $\lo(A)$ returns the lexicographical smallest ordering of the adjacency matrix as a row-major vector among all isomorphic permutations (where nodes are relabeled).\footnote{Strictly speaking, for consistency of the output length, we would require padding to a length of $(k'+1)^2$ if the adjacency matrix restricted to the $L_{k'}(v)$ subgraph is not the full adjacency matrix. However, since we only care about the behavior of the function when $G_1$ and $G_2$ are both connected, this scenario never happens, so it is not material for any part of the proof below.} For example, take:
\[A=\begin{pmatrix}
1 & 1\\
1 & 0
\end{pmatrix}\]
There are two isomorphic permutations of this adjacency matrix, which are:
\[\begin{pmatrix}
1 & 1\\
1 & 0
\end{pmatrix}\qquad \begin{pmatrix}
0 & 1\\
1 & 1
\end{pmatrix} \]
The row-major vectors of these two adjacency matrices are $[1,1,1,0]$ and $[0,1,1,1]$, in which $[0,1,1,1]$ is lexicographical smaller, so $\lo(A)=[0,1,1,1]$. Note that this function is permutation invariant.

Then for $G_1$ and $G_2$ connected, $A_{L_{k'}(v)}$ is always the adjacency matrix of the full graph. Therefore if $G_1$ and $G_2$ are connected and isomorphic, then their adjacency matrices are permutations of each other, and thus their lexicographical smallest ordering of the row-major vector form of the adjacency matrix are identical. The converse is also clearly true as the adjacency matrix determines the graph. 

Therefore, this function discriminates all connected graphs of $k'+1$ nodes, and the induction step is proven.
\subsubsection{$G_1$ disconnected and $G_2$ connected}
In the case where $G_1$ is disconnected and $G_2$ is connected, we define the aggregation function as the number of vertices in $L_{k'}(v)$, denoted $|V_{L_{k'}(v)}|$:
\begin{equation*}
    \Agg(v)=|V_{L_{k'}(v)}|
\end{equation*}
This is a permutation-invariant function. Note that for the connected graph $G_2$ and any vertex $v$ in $G_2$, this function returns $\Agg(v)=k'+1$ as $L_{k'}(v)=G_2$. Therefore, every node has the same embedding $k'+1$.

On the other hand, for the disconnected graph $G_1$, let $[U_1,\cdots, U_m]$ be the connected components of $G_1$. Then for a vertex $u \in U_i$, it is clear that $L_{k'}(u)=U_i$, and thus $\Agg(u)=|V_{U_i}|$ for all $u \in U_i$. And since $|V_{U_i}|<k'+1$ by construction, $\Agg(u)<k'+1$ for all $u \in U_i$, so the embedding of $G_1$ and $G_2$ are never equal when $G_1$ is connected and $G_2$ is disconnected. 

Therefore, this function discriminate all graphs of $k'+1$ nodes in which one is connected and one is disconnected, so the induction step is proven.

\end{proof}
\end{document}